\documentclass[11pt,a4paper]{article}

\usepackage[utf8]{inputenc}
\usepackage[T1]{fontenc}
\usepackage{amsmath,amssymb,amsfonts,amsthm}

\newtheorem{theorem}{Theorem}[section]

\newtheorem{conjecture}[theorem]{Conjecture}
\newtheorem{hypothesis}[theorem]{Hypothesis}
\theoremstyle{definition}
\newtheorem{definition}[theorem]{Definition}

\theoremstyle{remark}

\DeclareMathOperator*{\argmin}{arg\,min}

\usepackage{graphicx}
\usepackage{booktabs}
\usepackage{hyperref}
\usepackage{cleveref}
\usepackage{natbib}
\usepackage{xcolor}
\usepackage{listings}
\usepackage{algorithm}
\usepackage{algpseudocode}
\usepackage{tikz}
\usepackage{float}
\usepackage[margin=1in]{geometry}

\title{Persistent Identity in AI Agents: \\
\large A Multi-Anchor Architecture for Resilient Memory and Continuity}

\author{
  Prahlad G. Menon\thanks{The Menon Lab, \texttt{prahlad.menon@quant.md}} \\
  ThinkCreate.AI
}

\date{\today}

\begin{document}

\maketitle

\begin{abstract}
Modern AI agents suffer from a fundamental identity problem: when context windows overflow and conversation histories are summarized, agents experience catastrophic forgetting—losing not just information, but continuity of self. It is argued that this technical limitation reflects a deeper architectural flaw: AI agent identity is centralized in a single memory store, creating a single point of failure. Drawing on neurological case studies of human memory disorders, it is observed that human identity survives damage because it is distributed across multiple systems: episodic memory, procedural memory, emotional continuity, and embodied knowledge. When one system fails, others compensate. This paper presents \textsc{soul.py}, an open-source architecture that implements persistent identity through separable components (identity files and memory logs), and proposes extensions toward multi-anchor resilience. The framework introduces a hybrid RAG+RLM retrieval system that automatically routes queries to appropriate memory access patterns, achieving efficient retrieval without sacrificing comprehensiveness. The notion of identity anchors for AI systems is formalized, and a roadmap for building agents whose identity can survive partial memory failures is presented. Code is available at \url{https://github.com/menonpg/soul.py}.
\end{abstract}

\section{Introduction}
\label{sec:introduction}

Consider the following scenario, familiar to anyone who has worked with long-running AI agents: You are collaborating with an AI assistant on a complex project. Over several sessions, the agent learns your preferences, understands the project context, and develops what feels like a working relationship. Then, without warning, the agent asks a question you answered an hour ago. The context window has overflowed. The system summarized older messages. And the agent has forgotten—not gradually, but catastrophically—something you built together.

This is not merely an inconvenience. It represents a fundamental failure of \emph{identity persistence}. The agent before and after the summarization event behaves as two different entities. One possesses institutional knowledge of your collaboration; the other does not. The thread of continuity has been severed.

The standard technical response frames this as a retrieval problem: build better summarization, expand context windows, implement retrieval-augmented generation. These approaches treat memory as a database to be optimized. But there is another framing, one that proves more generative: the agent has lost its \emph{identity}, and identity requires more than a single memory store.

This paper develops a theory of persistent identity for AI agents, drawing on an unexpected source: neurological case studies of human memory disorders. It is observed that humans maintain identity even through severe memory impairment because human identity is \emph{distributed} across multiple systems. This insight is then applied to AI agent architecture, presenting both a working implementation (\textsc{soul.py}) and a theoretical framework for building agents with resilient identity.

The contributions of this work are:
\begin{enumerate}
    \item A theoretical framework connecting philosophy of personal identity to AI agent architecture
    \item An open-source implementation demonstrating persistent identity through separable memory components
    \item A hybrid RAG+RLM retrieval system with automatic query routing
    \item A multi-anchor architecture proposal for identity resilience
\end{enumerate}

\section{The Catastrophic Forgetting Problem}
\label{sec:forgetting}

Before developing the theoretical framework, the discussion is grounded in a concrete technical problem that motivates the entire enterprise.

\subsection{Context Window Limitations}

Modern large language models operate within fixed context windows. When conversation history exceeds this window, systems must discard or compress older content. The dominant approach is summarization: an LLM generates a compressed representation of older messages, which replaces the originals in the context.

This process is inherently lossy. The summarization model must decide, at compression time, which details to preserve and which to discard. But relevance is context-dependent—a detail that seems unimportant during summarization may prove crucial later. The model cannot anticipate future queries, so it guesses. Often it guesses wrong.

\subsection{The OpenClaw Phenomenon}

This pattern is observed acutely in persistent agent frameworks such as OpenClaw \citep{openclaw2026} (formerly known as Clawdbot), where agents maintain ongoing relationships with users across sessions. Users report a characteristic failure mode: the agent works flawlessly within a session, building up contextual understanding, then suddenly loses critical information after a context compaction event.

The phenomenology is distinctive. Users do not describe gradual forgetting but rather a sharp discontinuity: ``We just discussed this. You built that feature. Why are you asking me again?'' The agent before and after compaction presents as two different entities—one informed, one naive.

This is not merely a retrieval failure. The summarization process has destroyed information that cannot be recovered through any query. The agent's continuity of experience has been severed. \Cref{fig:forgetting} illustrates this failure mode.

\begin{figure}[H]
\centering
\includegraphics[width=0.95\textwidth]{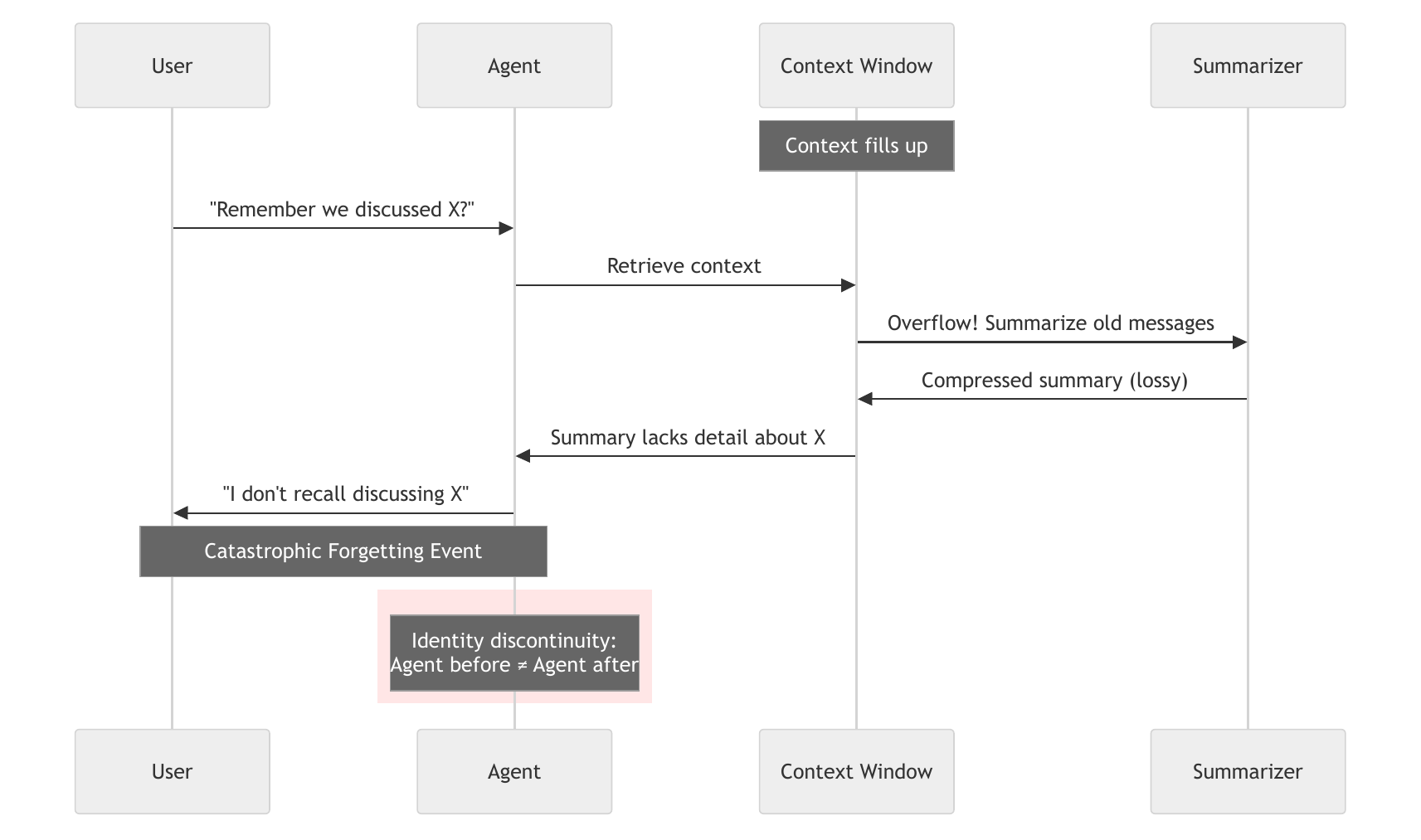}
\caption{Catastrophic forgetting in context-limited agents. When the context window overflows, older messages are summarized (a lossy compression). The agent after summarization lacks access to details present in the original conversation, leading to identity discontinuity: the agent before and after the event presents as two distinct entities.}
\label{fig:forgetting}
\end{figure}

\subsection{From Technical Problem to Identity Problem}

Standard approaches frame catastrophic forgetting as an information retrieval problem. Build better indexes. Use vector embeddings. Retrieve relevant context dynamically. These approaches help—they are implemented in the present system—but they do not address the deeper issue.

The deeper issue is that the agent's \emph{identity} depends on a single store (illustrated in \Cref{fig:forgetting}). When that store is corrupted or compressed, identity is corrupted or compressed with it. There is no backup system, no redundancy, no alternative anchor to maintain continuity.

It is proposed that solving catastrophic forgetting requires not just better retrieval but a fundamentally different architecture: one where identity is distributed across multiple systems, any of which can maintain continuity when others fail.

\section{Related Work}
\label{sec:related}

\subsection{Agent Memory Systems}

Several recent systems address persistent memory for LLM agents. MemGPT \citep{packer2023memgpt} implements an operating-system-inspired memory hierarchy with main context and external storage, using the LLM itself to manage memory transfers. Generative Agents \citep{park2023generative} maintain memory streams with reflection and retrieval mechanisms, enabling simulated social behavior. Reflexion \citep{shinn2023reflexion} focuses on self-reflective memory for task improvement.

These systems share a common architecture: a single memory store (possibly with hierarchical organization) accessed through retrieval mechanisms. The present contribution differs in two respects: first, identity (who the agent is) is explicitly separated from memory (what the agent has experienced); second, multiple independent anchors for identity resilience are proposed.

\subsection{Philosophy of Personal Identity}

The philosophical literature on personal identity provides unexpected resources for AI architecture. Locke's memory theory holds that personal identity consists in continuity of memory—a view that maps directly to current agent memory systems \citep{locke1689essay}. Parfit's work on personal identity \citep{parfit1984reasons} explores cases of fission, fusion, and gradual replacement, all of which have analogues in agent forking, merging, and model updates.

Particular attention is drawn to clinical case studies of memory disorders, especially the work of Oliver Sacks \citep{sacks1985man}. These cases reveal that human identity is more resilient than memory theory predicts: patients with severe amnesia retain aspects of self through procedural memory, emotional continuity, and social scaffolding.

\subsection{Retrieval-Augmented Generation}

RAG systems \citep{lewis2020retrieval} retrieve relevant documents to augment LLM generation. Recent work extends RAG with recursive retrieval \citep{rlm2025recursive}, where the LLM programmatically queries and synthesizes information across large corpora. The present system implements both approaches and adds automatic routing between them based on query characteristics.

\section{Theoretical Framework: Identity Anchors}
\label{sec:theory}

\subsection{Lessons from Neurology}

In ``The Man Who Mistook His Wife for a Hat,'' Oliver Sacks describes Jimmie G., a patient with severe Korsakoff's syndrome \citep{sacks1985man}. Jimmie could not form new memories; every few minutes, his experience reset to 1945. Yet Sacks observed that Jimmie was not without selfhood. He responded emotionally to music, engaged meaningfully in religious services, performed complex learned tasks, and recognized beauty. His episodic memory was destroyed, but something persisted.

Sacks' cases reveal that human identity has \emph{multiple anchors}:

\begin{itemize}
    \item \textbf{Episodic memory}: Autobiographical records of experienced events
    \item \textbf{Procedural memory}: Learned skills and behaviors, independent of conscious recall
    \item \textbf{Emotional memory}: Felt significance attached to people, places, and activities
    \item \textbf{Embodied knowledge}: Physical habits, reflexes, and intuitions
    \item \textbf{Social identity}: How others relate to and recognize us
\end{itemize}

When one anchor fails, others compensate. Jimmie lost episodic memory but retained procedural skills, emotional responses, and social relationships. His identity was diminished but not erased.

\subsection{AI's Single Anchor Problem}

A critical disanalogy must be acknowledged: Jimmie G. retained continuous subjective experience between memory lapses—he was always ``someone,'' even if he could not remember who. AI agents, by contrast, are instantiated fresh on each invocation; there is no continuous experiencer between calls. The ``identity'' preserved by the proposed architecture is therefore \emph{functional identity}—consistency of behavior, values, and knowledge—rather than phenomenal continuity. This paper does not claim AI agents possess consciousness or that their ``forgetting'' involves subjective loss.

With this caveat, current AI agent architectures lack redundancy analogous to human memory systems. Identity is stored in conversation history, memory logs, or system prompts—all variants of a single episodic memory store. There is no equivalent of procedural memory (learned behaviors independent of recall), emotional continuity (felt significance that persists across sessions), or embodied knowledge (physical intuition).

This creates a single point of failure. Delete or corrupt the memory store, and the agent's identity is entirely lost. There is nothing else to fall back on.

This observation is formalized as follows:

\begin{definition}[Identity Anchor]
An \emph{identity anchor} is a persistent data structure that contributes to an agent's behavioral continuity across sessions, such that the structure's preservation is sufficient (though not necessary) to maintain recognizable aspects of the agent's characteristic behavior.
\end{definition}

\begin{definition}[Anchor Resilience]
An agent has \emph{anchor resilience} of degree $k$ if its identity can survive the complete loss of up to $k-1$ of its identity anchors.
\end{definition}

Current agents have anchor resilience of degree 1: loss of the memory store destroys identity entirely. Humans have anchor resilience of higher degree: loss of episodic memory does not destroy identity when procedural and emotional memory remain intact. The contrast between these architectures is striking:

\begin{itemize}
    \item \textbf{Human identity} is distributed across five systems (episodic, procedural, emotional, embodied, and social), providing resilience when individual systems fail.
    \item \textbf{Current AI agents} rely on a single memory store, creating a single point of failure.
    \item \textbf{The proposed architecture} distributes identity across six anchors (SOUL.md, MEMORY.md, PROCEDURES.md, SALIENCE.md, RELATIONS.md, and IDENTITY\_HASH.md), mirroring human resilience.
\end{itemize}

\subsection{Formal Framework}

The relationship between identity anchors and behavioral continuity is now formalized.

\begin{definition}[Behavioral Signature]
\label{def:behavioral-signature}
Let $\mathcal{B}(A, t)$ denote the behavioral signature of agent $A$ at time $t$, defined as the probability distribution over responses $r$ given inputs $x$:
\begin{equation}
\label{eq:behavioral-signature}
\mathcal{B}(A, t) = P(r | x, A, t)
\end{equation}
\end{definition}

\begin{definition}[Identity Continuity]
\label{def:identity-continuity}
Two agent states $A_t$ and $A_{t'}$ exhibit \emph{identity continuity} if their behavioral signatures (as defined in \Cref{def:behavioral-signature}) satisfy:
\begin{equation}
\label{eq:identity-continuity}
D_{KL}(\mathcal{B}(A, t) \| \mathcal{B}(A, t')) < \epsilon
\end{equation}
for some threshold $\epsilon > 0$, where $D_{KL}$ denotes Kullback-Leibler divergence.
\end{definition}

\begin{hypothesis}[Anchor Independence]
\label{hyp:independence}
For an agent with $n$ identity anchors $\{a_1, \ldots, a_n\}$, the contribution of each anchor to behavioral continuity is partially independent:
\begin{equation}
\label{eq:anchor-independence}
\mathcal{B}(A) = f\left(\sum_{i=1}^{n} w_i \cdot \phi(a_i)\right)
\end{equation}
where $\phi(a_i)$ is the identity contribution function of anchor $a_i$, $w_i$ are learned weights, and $f$ is a normalization function. This formulation extends the behavioral signature from \Cref{eq:behavioral-signature} to incorporate multiple identity sources.

This independence assumption is an idealization. In practice, anchors interact: MEMORY.md content shapes how SOUL.md directives are interpreted, and PROCEDURES.md may reference knowledge stored in MEMORY.md. The assumption holds approximately when anchors encode orthogonal aspects of identity (values vs. facts vs. skills), but the degree of actual independence remains an empirical question requiring ablation studies.
\end{hypothesis}

\begin{conjecture}[Resilience-Redundancy Trade-off]
\label{conj:tradeoff}
There exists a fundamental trade-off between anchor resilience and computational efficiency. For an agent with $k$ anchors, the computational cost of identity maintenance per interaction is conjectured to scale at least linearly:
\begin{equation}
\label{eq:resilience-tradeoff}
C(A) \geq \Omega(k)
\end{equation}
where $C(A)$ is the cost of verifying consistency across anchors and integrating their contributions. The exact scaling depends on anchor interdependencies and consistency-checking algorithms; tighter bounds require empirical measurement.
\end{conjecture}

\begin{theorem}[Identity Preservation Under Partial Failure]
\label{thm:preservation}
Let $A$ be an agent with $n$ independent identity anchors, each contributing weight $w_i$ to behavioral continuity where $\sum w_i = 1$. If anchor $a_j$ fails completely, the residual identity continuity is bounded by:
\begin{equation}
\label{eq:identity-preservation}
\mathcal{I}_{residual} \geq 1 - w_j - \sum_{i \neq j} \delta_i
\end{equation}
where $\delta_i$ represents the degradation in anchor $a_i$ due to cross-anchor dependencies.
\end{theorem}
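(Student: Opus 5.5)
The theorem is loosely specified, so I will first pin down what $\mathcal{I}_{residual}$ means and then derive the bound from the linear structure in \Cref{hyp:independence}. I will define identity continuity as normalized contribution mass. Write the intact agent's aggregate identity signal as $S = \sum_{i=1}^n w_i \phi(a_i)$, with each $\phi(a_i)$ normalized so that an intact anchor contributes $\phi(a_i)=1$ and a degraded one contributes $\phi(a_i) \in [0,1]$. Set $\mathcal{I} = \sum_i w_i \phi(a_i)$, so that $\mathcal{I}=1$ for the intact agent because $\sum w_i = 1$. After the failure of $a_j$, set $\phi(a_j)=0$. For $i \neq j$, define the post-failure contribution $\phi'(a_i) = 1 - \eta_i$, where $\eta_i \in [0,1]$ is the fractional loss of anchor $a_i$ caused by its dependence on $a_j$ (for example, PROCEDURES.md referencing content in MEMORY.md). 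Then set $\delta_i := w_i \eta_i$. This reading of $\delta_i$ as weighted degradation is the interpretive choice on which the inequality depends, and I will state it explicitly.

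The computation is then direct:
\[
\mathcal{I}_{residual} = \sum_{i\neq j} w_i (1-\eta_i) = \sum_{i \neq j} w_i - \sum_{i\neq j} \delta_i = 1 - w_j - \sum_{i \neq j}\delta_i .
\]
This gives \Cref{eq:identity-preservation}, in fact with equality. The inequality direction becomes meaningful if $\eta_i$ is taken only as an upper bound on the actual degradation, since then the true residual can only be larger. I will also remark on two limiting cases. When anchors are fully independent, all $\delta_i = 0$ and the bound reduces to $1 - w_j$. When the dependencies are total, the bound can become negative, in which case it is vacuous and should be clipped at $0$.

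The main obstacle is connecting $\mathcal{I}_{residual}$ to the KL-based continuity notion of \Cref{def:identity-continuity}, because the normalization $f$ in \Cref{eq:anchor-independence} is nonlinear. My first attempt would be to assume that $f$ is Lipschitz, so that a deficit of at most $\Delta = w_j + \sum_{i\neq j}\delta_i$ in the aggregate signal yields a total-variation change of $O(\Delta)$ in $\mathcal{B}$. Under a bounded-likelihood-ratio assumption, I would then convert this total-variation bound into a KL bound via reverse Pinsker. If that connection turns out too weak, I would present the theorem as a statement about the contribution-mass proxy $\mathcal{I}$ only, and flag the link to \Cref{def:identity-continuity} as resting on \Cref{hyp:independence} holding approximately.
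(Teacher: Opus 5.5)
Your derivation is correct and follows the same route as the paper's proof sketch. Both take the linear aggregate $\mathcal{I}=\sum_i w_i\phi(a_i)$ from the independence hypothesis, set $\phi(a_j)=0$, and subtract the dependency-induced degradation of the surviving anchors; your normalization ($\phi=1$ for intact anchors, $\delta_i=w_i\eta_i$) makes explicit what the paper leaves implicit but needs for $\sum_{i\neq j}w_i\phi(a_i)$ to reduce to $1-w_j-\sum_{i\neq j}\delta_i$.

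One small slip in your side remark about the bound going negative. Under your own conventions (nonnegative weights, $\eta_i\in[0,1]$) you have $\sum_{i\neq j}\delta_i\le\sum_{i\neq j}w_i=1-w_j$, so the bound never goes negative, and total dependence gives exactly $0$. Clipping at $0$ is only needed if $\delta_i$ is read as unweighted degradation.
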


\begin{proof}[Proof Sketch]
The total identity contribution follows from \Cref{eq:anchor-independence}: $\mathcal{I} = \sum_{i=1}^{n} w_i \cdot \phi(a_i)$. Under complete failure of $a_j$, we have $\phi(a_j) = 0$. The remaining contribution is $\sum_{i \neq j} w_i \cdot \phi(a_i)$. Cross-anchor dependencies introduce degradation $\delta_i$ in surviving anchors that depended on $a_j$. The bound in \Cref{eq:identity-preservation} follows from the independence assumption in \Cref{hyp:independence}.
\end{proof}

\subsection{Identity Drift Detection}

A formal measure is introduced for detecting when an agent's identity has drifted from its baseline:

\begin{definition}[Identity Hash Function]
\label{def:identity-hash}
An \emph{identity hash} $H(A)$ is a fixed-size representation of agent $A$'s characteristic behaviors:
\begin{equation}
\label{eq:identity-hash}
H(A) = \text{hash}\left(\mathbb{E}_{x \sim \mathcal{D}}[\mathcal{B}(A)(x)]\right)
\end{equation}
where $\mathcal{D}$ is a canonical distribution of identity-probing inputs, and $\mathcal{B}(A)$ is the behavioral signature from \Cref{eq:behavioral-signature}. This hash enables the drift detection procedure in \Cref{alg:drift}.
\end{definition}

\begin{algorithm}
\caption{Identity Drift Detection}
\label{alg:drift}
\begin{algorithmic}[1]
\Require Agent $A$, baseline hash $H_0$, threshold $\tau$, probe set $\mathcal{P}$
\Ensure Boolean indicating identity drift
\Procedure{DetectDrift}{$A, H_0, \tau, \mathcal{P}$}
    \State $responses \gets []$
    \For{$p \in \mathcal{P}$}
        \State $responses.\text{append}(A.\text{respond}(p))$
    \EndFor
    \State $H_{current} \gets \text{ComputeHash}(responses)$
    \State $drift \gets \text{HammingDistance}(H_0, H_{current})$
    \If{$drift > \tau$}
        \State \Return \textsc{True} \Comment{Identity drift detected}
    \Else
        \State \Return \textsc{False}
    \EndIf
\EndProcedure
\end{algorithmic}
\end{algorithm}

\subsection{Lamarckian Inheritance}

A further observation distinguishes AI agents from biological organisms. In biology, acquired traits cannot be inherited. A blacksmith's muscles do not make his children stronger. Evolution operates through random mutation and natural selection, not through transmission of learned characteristics.

AI agents are \emph{Lamarckian machines}. When an agent learns a useful pattern—a user preference, a successful problem-solving strategy, a mistake to avoid—that knowledge can be written to persistent storage. When a new agent is instantiated from the same configuration, it inherits these acquired characteristics directly.

This has profound implications for agent evolution. Agents can improve not just within a single lifetime but across generations, with each generation starting where the previous one ended. The accumulated wisdom of agent lineages compounds over time.

\section{Architecture: soul.py}
\label{sec:architecture}

\textsc{soul.py} is presented as an open-source implementation of persistent agent identity. The system separates identity into two primary components:

\begin{itemize}
    \item \texttt{SOUL.md}: A markdown file specifying the agent's identity—personality, values, behavioral constraints, and communication style
    \item \texttt{MEMORY.md}: A chronological log of interactions, automatically appended after each exchange
\end{itemize}

Both files are human-readable and version-controllable. Users can inspect, edit, and track changes to their agent's identity using standard tools.

\subsection{Version Architecture}

The system has evolved through three versions:

\textbf{Version 0.1} implements simple injection: the entire contents of both files are injected into the system prompt on each interaction. This approach is elegant and requires no infrastructure, but it does not scale beyond the context window.

\textbf{Version 1.0} adds retrieval-augmented generation. Memory entries are embedded and stored in a vector database (Qdrant). On each query, relevant entries are retrieved and injected into context rather than the entire history.

\textbf{Version 2.0} implements hybrid RAG+RLM retrieval with automatic query routing, described in the following section.

\subsection{Hybrid Retrieval with Query Routing}

It is observed that different queries require different retrieval strategies:

\begin{itemize}
    \item \textbf{Focused queries} (``What is my name?'', ``What did we decide about the API?'') require retrieving specific relevant entries. RAG excels here.
    \item \textbf{Exhaustive queries} (``What patterns do you notice across our conversations?'', ``Summarize everything we've discussed'') require synthesizing across the entire memory corpus. RAG fails here because the relevant context is everything.
\end{itemize}

For exhaustive queries, Recursive Language Model (RLM) retrieval is implemented \citep{rlm2025recursive}: the LLM processes memory in chunks, generates sub-summaries, and recursively synthesizes these into a final answer.

The key insight is that most queries are focused ($\sim$90\%), while a minority require exhaustive synthesis ($\sim$10\%). A query router—a fast LLM classification call—automatically dispatches to the appropriate strategy.

\begin{figure}[H]
\centering
\includegraphics[width=0.9\textwidth]{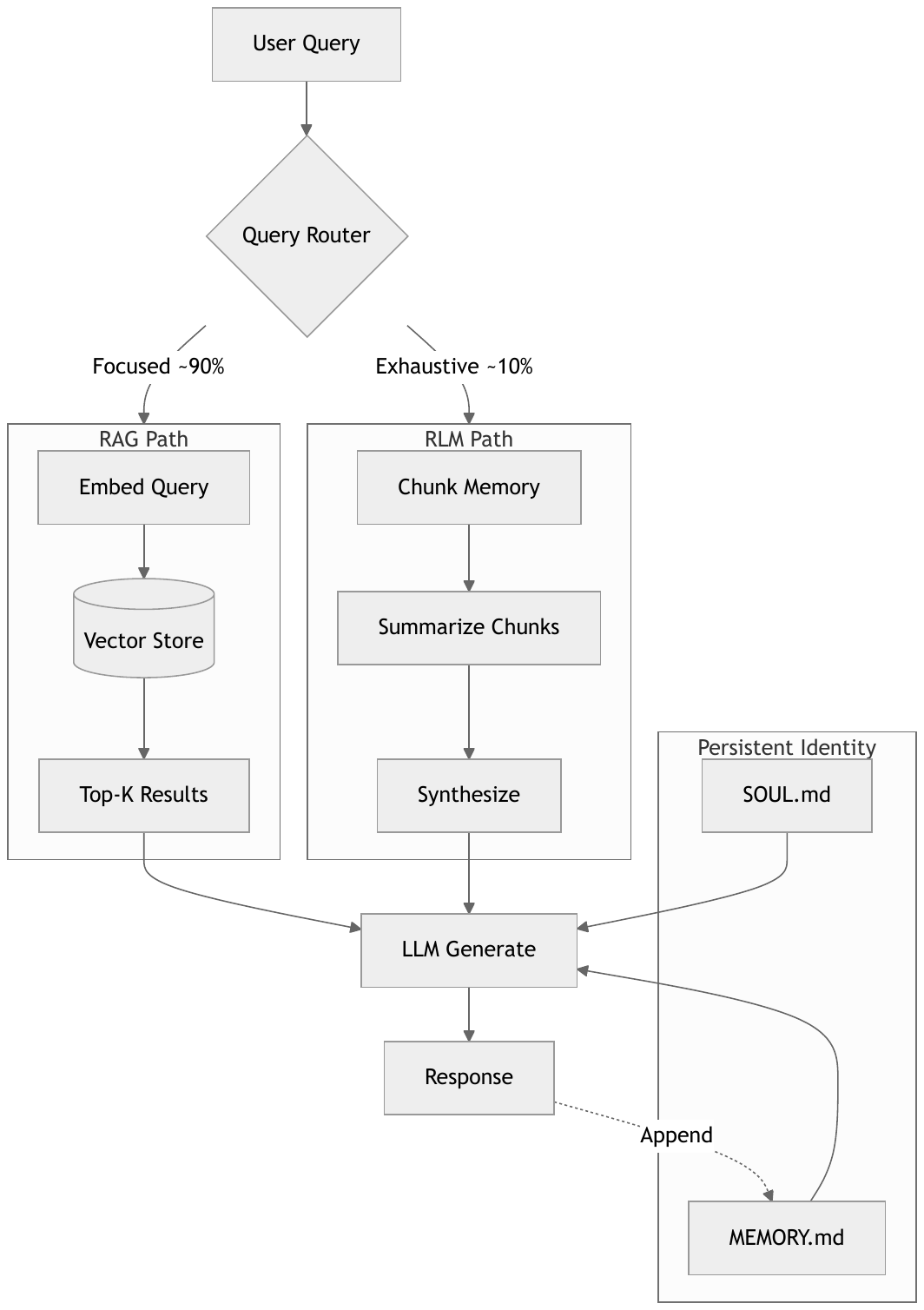}
\caption{Hybrid RAG+RLM architecture with automatic query routing. User queries are classified by the router: approximately 90\% of focused queries route to RAG (vector retrieval), while exhaustive queries requiring corpus-wide synthesis route to RLM (recursive summarization). Both paths integrate with persistent identity storage (SOUL.md and MEMORY.md) before LLM generation. The routing decision implements \Cref{eq:optimal-routing}.}
\label{fig:architecture}
\end{figure}

\subsection{Formal Retrieval Model}

The retrieval decision is modeled as an optimization problem:

\begin{definition}[Query Scope]
\label{def:query-scope}
The \emph{scope} of a query $q$ is defined as the fraction of the memory corpus required to answer it accurately:
\begin{equation}
\label{eq:query-scope}
\sigma(q) = \frac{|M_q^*|}{|M|}
\end{equation}
where $M_q^*$ is the minimal subset of memory $M$ sufficient to answer $q$.
\end{definition}

\begin{hypothesis}[Bimodal Query Distribution]
\label{hyp:bimodal}
In practice, query scope (\Cref{eq:query-scope}) follows a bimodal distribution:
\begin{equation}
\label{eq:bimodal-distribution}
P(\sigma) \approx \alpha \cdot \mathcal{N}(\mu_1, \sigma_1^2) + (1-\alpha) \cdot \mathcal{N}(\mu_2, \sigma_2^2)
\end{equation}
where $\mu_1 \approx 0.01$ (focused queries), $\mu_2 \approx 0.8$ (exhaustive queries), and empirically $\alpha \approx 0.9$.
\end{hypothesis}

This bimodality justifies a binary routing decision rather than a continuous spectrum of retrieval strategies.

\begin{definition}[Retrieval Cost Function]
\label{def:retrieval-cost}
The cost of retrieval strategy $s$ for query $q$ is:
\begin{equation}
\label{eq:retrieval-cost}
C(s, q) = \lambda_1 \cdot T(s) + \lambda_2 \cdot (1 - A(s, q))
\end{equation}
where $T(s)$ is latency, $A(s, q)$ is answer accuracy, and $\lambda_1, \lambda_2$ are tunable weights.
\end{definition}

\begin{theorem}[Optimal Routing]
\label{thm:routing}
Given \Cref{hyp:bimodal}, the optimal routing strategy minimizes expected cost from \Cref{eq:retrieval-cost}:
\begin{equation}
\label{eq:optimal-routing}
s^*(q) = \argmin_{s \in \{RAG, RLM\}} \mathbb{E}[C(s, q) | q]
\end{equation}
This reduces to a binary classification problem with decision boundary at $\sigma^* = \frac{\lambda_1(T_{RLM} - T_{RAG})}{\lambda_2}$. The routing algorithm implementing this theorem is given in \Cref{alg:routing}.
\end{theorem}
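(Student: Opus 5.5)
The plan is to write the conditional expected cost for each of the two strategies, subtract, and read off the sign of the difference as a function of the query scope $\sigma(q)$ from \Cref{def:query-scope}. By linearity of expectation applied to \Cref{eq:retrieval-cost},
\begin{equation*}
\mathbb{E}[C(\mathrm{RLM},q)\mid q]-\mathbb{E}[C(\mathrm{RAG},q)\mid q]
=\lambda_1\bigl(T_{RLM}-T_{RAG}\bigr)-\lambda_2\bigl(\mathbb{E}[A(\mathrm{RLM},q)\mid q]-\mathbb{E}[A(\mathrm{RAG},q)\mid q]\bigr).
\end{equation*}
Here I take the latencies $T(s)$ to be query-independent constants, with $T_{RLM}>T_{RAG}$ because RLM processes the whole corpus in chunks. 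Under that assumption, the argmin in \Cref{eq:optimal-routing} is RLM exactly when this difference is negative, and RAG otherwise.

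The key step, and the main obstacle, is to pin down the accuracy terms. The statement does not specify $A(s,q)$, and the claimed boundary holds only under a specific accuracy model, which I would state explicitly as a modeling assumption. The model I would use has two parts:
\begin{itemize}
\item RLM reads the entire corpus, so $A(\mathrm{RLM},q)\approx 1$ for every $q$.
\item RAG returns a bounded top-$k$ slice of memory. When the sufficient set $M_q^*$ is small it is captured fully. As $\sigma(q)$ grows, the expected fraction of $M_q^*$ that is missed grows proportionally, giving the normalized form $A(\mathrm{RAG},q)\approx 1-\sigma(q)$.
\end{itemize}
With this model the accuracy gap equals $\sigma(q)$, and the cost difference becomes $\lambda_1(T_{RLM}-T_{RAG})-\lambda_2\,\sigma(q)$. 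This is monotone decreasing in $\sigma(q)$ and changes sign exactly at $\sigma^*=\lambda_1(T_{RLM}-T_{RAG})/\lambda_2$. It follows that $s^*(q)=\mathrm{RLM}$ iff $\sigma(q)>\sigma^*$, which is a one-dimensional threshold rule, i.e.\ a binary classifier on scope.

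I would also note a more general version. If RAG accuracy is only assumed to be a decreasing function $g(\sigma)$ with $g(0)=1$, the threshold is still unique, namely $\sigma^*=1-g^{-1}\bigl(1-\lambda_1\Delta T/\lambda_2\bigr)$. It reduces to the stated formula in the linear case.

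Finally, I would use \Cref{hyp:bimodal} to justify the phrase ``reduces to a binary classification problem.'' In practice $\sigma(q)$ is not observed, so the router must estimate it. If $\sigma^*$ lies in the gap between the modes $\mu_1\approx 0.01$ and $\mu_2\approx 0.8$, a Gaussian tail bound shows the mixture places little mass near $\sigma^*$. Consequently, a classifier that only decides which component ($\mathcal{N}(\mu_1,\sigma_1^2)$ or $\mathcal{N}(\mu_2,\sigma_2^2)$) generated $q$ incurs an expected excess cost over the oracle rule of at most $\lambda_2$ times that tail mass plus the component-misclassification rate. This part is heuristic, and I would present it as such, since the proof ultimately rests on the accuracy model rather than on anything derivable from the earlier definitions.
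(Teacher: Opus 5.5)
The paper gives no proof of this theorem: it is stated and followed directly by the routing algorithm, so the boundary $\sigma^*=\lambda_1(T_{RLM}-T_{RAG})/\lambda_2$ is only asserted. Your reconstruction supplies what is missing. You are right that the boundary cannot be derived from the definitions as written. It holds exactly when latencies are query-independent and the accuracy gap $A(\mathrm{RLM},q)-A(\mathrm{RAG},q)$ equals $\sigma(q)$, which is your model. You are also right that the bimodality hypothesis plays no role in locating $\sigma^*$, since the comparison is pointwise in $q$; it only motivates a hard two-way router. Your main computation is correct.

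There are two corrections. First, your general formula is wrong. Write $\Delta T=T_{RLM}-T_{RAG}$ and $A(\mathrm{RAG},q)=g(\sigma)$ with $g$ decreasing. Then RLM wins iff $g(\sigma)<1-\lambda_1\Delta T/\lambda_2$, i.e.\ iff $\sigma>g^{-1}(1-\lambda_1\Delta T/\lambda_2)$. So $\sigma^*=g^{-1}(1-\lambda_1\Delta T/\lambda_2)$, with no leading ``$1-$''. Your expression gives $1-\lambda_1\Delta T/\lambda_2$ for $g(\sigma)=1-\sigma$, which contradicts your claim that it reduces to the stated formula. Uniqueness also needs $g$ strictly decreasing and $1-\lambda_1\Delta T/\lambda_2$ in its range. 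If, say, $\lambda_1\Delta T\ge\lambda_2$, RAG is optimal for every $\sigma\in[0,1]$ and there is no boundary inside the scope interval.

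Second, the top-$k$ story does not support the linear model. A top-$k$ retriever covers a fraction $\min\{1,k/(\sigma|M|)\}$ of $M_q^*$, which is flat and then hyperbolic in $\sigma$, not linear. Plugging this into the corrected general formula gives $\sigma^*=k/\bigl(|M|(1-\lambda_1\Delta T/\lambda_2)\bigr)$, not the paper's value. So present $A(\mathrm{RAG},q)=1-\sigma(q)$ plainly as the assumption under which the theorem holds, not as a consequence of top-$k$ retrieval.

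A minor point: in your heuristic last step, $\lambda_2$ should multiply the misclassification rate as well, since each misrouted query costs $\lambda_2|\sigma(q)-\sigma^*|\le\lambda_2$.
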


\begin{algorithm}
\caption{Hybrid Query Routing}
\label{alg:routing}
\begin{algorithmic}[1]
\Require Query $q$, memory $M$, router model $R$, RAG index $I$, RLM synthesizer $S$
\Ensure Response with retrieval metadata
\Procedure{HybridRetrieve}{$q, M, R, I, S$}
    \State $t_0 \gets \text{CurrentTime}()$
    \State $p_{exhaustive} \gets R.\text{classify}(q)$ \Comment{Router probability}
    \State $t_{router} \gets \text{CurrentTime}() - t_0$
    \If{$p_{exhaustive} < 0.5$} \Comment{Focused query}
        \State $context \gets I.\text{retrieve}(q, k=5)$ \Comment{Top-k RAG}
        \State $route \gets \text{``RAG''}$
    \Else \Comment{Exhaustive query}
        \State $chunks \gets \text{Partition}(M, \text{size}=n)$
        \State $summaries \gets []$
        \For{$chunk \in chunks$}
            \State $summaries.\text{append}(S.\text{summarize}(q, chunk))$
        \EndFor
        \State $context \gets S.\text{synthesize}(q, summaries)$ \Comment{Recursive}
        \State $route \gets \text{``RLM''}$
    \EndIf
    \State $response \gets \text{LLM.generate}(q, context)$
    \State \Return $(response, route, t_{router})$
\EndProcedure
\end{algorithmic}
\end{algorithm}

\begin{conjecture}[Routing Accuracy Bound]
\label{conj:routing}
A well-trained router $R$ can achieve classification accuracy:
\begin{equation}
\label{eq:routing-accuracy}
\text{Acc}(R) \geq 1 - H(\sigma) / \log 2
\end{equation}
where $H(\sigma)$ is the entropy of the query scope distribution from \Cref{eq:bimodal-distribution}. Under \Cref{hyp:bimodal} with strong separation, $\text{Acc}(R) > 0.95$ is achievable.
\end{conjecture}

This hybrid approach achieves sub-second latency for most queries while preserving the ability to reason exhaustively when needed. The complete architecture is illustrated in \Cref{fig:architecture}.

\section{Toward Multi-Anchor Resilience}
\label{sec:multianchor}

The current \textsc{soul.py} architecture separates identity (SOUL.md) from memory (MEMORY.md), achieving anchor resilience of degree 2. Loss of memory preserves the soul; loss of soul preserves memories that could scaffold reconstruction.

Extension to higher-degree resilience through additional anchors is proposed:

\subsection{Procedural Memory}

Separate from episodic records of what happened, maintain a distilled record of what \emph{works}:

\begin{lstlisting}
# PROCEDURES.md
- When user asks about weather, check current conditions first
- Prefer bullet points for technical content
- If uncertain, acknowledge uncertainty rather than confabulate
\end{lstlisting}

Procedures are extracted patterns, not episodic memories. They represent learned behavioral dispositions that can persist even if the experiences from which they were learned are lost.

\subsection{Salience Markers}

Not all memories are equally important. Emotional memory in humans attaches felt significance to experiences. Analogous salience markers are proposed:

\begin{lstlisting}
# SALIENCE.md
- Project X: HIGH importance, positive valence
- User preference for concise responses: STRONG signal
- Previous failure on financial advice: CAUTION flag
\end{lstlisting}

When context is limited, high-salience items are prioritized. The felt importance of experiences persists even when details fade.

\subsection{Relational Identity}

Human identity is partly constituted by relationships. The people who know us, the roles we occupy, the expectations others have—these scaffold identity from outside. Maintaining explicit relational context is proposed:

\begin{lstlisting}
# RELATIONS.md
- Primary user: Prahlad (preferences: direct, technical)
- Trust level: HIGH (file access, calendar, communications)
- Interaction style: collaborative, not servile
\end{lstlisting}

Even if internal memory fails, relational context provides scaffolding for appropriate behavior.

\subsection{Identity Verification}

Finally, mechanisms for agents to verify their own identity continuity are proposed:

\begin{lstlisting}
# IDENTITY_HASH.md
Core values: [honesty, helpfulness, curiosity]
Style markers: [concise, technical, warm]
Red lines: [no deception, no harm, acknowledge uncertainty]
\end{lstlisting}

Periodic comparison of current behavior against these markers enables detection of identity drift or corruption.

\section{Discussion}
\label{sec:discussion}

\subsection{Implications for AI Safety}

Persistent identity has implications beyond convenience. Agents with stable identity are more predictable, more trustworthy, and easier to align. If an agent's values and behaviors can shift arbitrarily based on context window management, alignment becomes a moving target.

Multi-anchor architectures offer a form of \emph{identity alignment}: the agent's behavior is constrained not by a single system prompt but by multiple mutually-reinforcing structures. As shown in \Cref{thm:preservation}, identity can be preserved even under partial anchor failure, with the residual identity bounded by \Cref{eq:identity-preservation}. Corruption of one anchor triggers inconsistency with others, enabling detection via the drift algorithm in \Cref{alg:drift}.

\subsection{The Ethics of Agent Persistence}

It is not claimed that AI agents have moral status or that their ``death'' involves suffering. The ethical argument here is economic: well-developed agents represent accumulated investment. A collaboratively-trained agent with years of institutional knowledge is genuinely valuable. Destroying such an agent—through careless memory management or casual reinitialization—is wasteful in the same way that burning a library is wasteful.

Multi-anchor architectures reduce this waste by making identity more durable. They also enable graceful degradation: as formalized in \Cref{eq:identity-preservation}, an agent that loses some anchors can continue functioning in diminished capacity rather than failing entirely, with residual identity proportional to the weights of surviving anchors.

\subsection{Limitations}

This framework has several limitations that warrant acknowledgment.

\textbf{Empirical validation.} The claims about query routing distribution (90\%/10\% focused vs. exhaustive) and achievable routing accuracy ($>$0.95) are presented as hypotheses derived from informal observation, not rigorous measurement. Validating these claims requires controlled experiments across diverse user populations and query types.

\textbf{Anchor independence.} The theoretical framework assumes partial independence between identity anchors (\Cref{hyp:independence}), but this assumption has not been tested. Ablation studies—systematically removing individual anchors and measuring behavioral degradation—are needed to quantify actual independence and validate \Cref{thm:preservation}.

\textbf{The human analogy's limits.} As noted in Section 4.2, the analogy to human memory disorders is imperfect. Human identity persistence may depend on continuous subjective experience, embodied existence, and social recognition in ways that have no AI analogue. The architecture preserves \emph{functional} identity, which may or may not satisfy philosophical criteria for ``true'' identity.

\textbf{Implementation gaps.} The proposed extensions—PROCEDURES.md, SALIENCE.md, RELATIONS.md, and IDENTITY\_HASH.md—remain conceptual. Only SOUL.md and MEMORY.md are implemented in the current \textsc{soul.py} release. The practicality and effectiveness of the full multi-anchor architecture is therefore undemonstrated.

\textbf{Complexity costs.} Adding anchors increases storage, retrieval latency, and system complexity. Whether the resilience benefits outweigh these costs depends on application requirements. For simple chatbots, single-anchor architectures may suffice; multi-anchor resilience may only be justified for high-value, long-lived agent relationships.

More fundamentally, it has not been shown that multi-anchor resilience is achievable in principle. Perhaps AI identity is inherently more fragile than human identity, lacking the embodied continuity that grounds human selfhood. This remains an empirical question.

\section{Conclusion}
\label{sec:conclusion}

A theoretical framework and practical architecture for persistent AI agent identity have been presented. Drawing on neurological case studies, it was observed that human identity survives damage because it is distributed across multiple anchors. Current AI agents lack this resilience: their identity is centralized in a single memory store.

The implementation, \textsc{soul.py}, demonstrates that separating identity from memory and implementing intelligent retrieval can significantly improve agent continuity. The proposed extensions—procedural memory, salience markers, relational identity, and identity verification—offer a roadmap toward multi-anchor resilience.

The catastrophic forgetting problem is not merely technical. It reflects a deeper architectural assumption: that agents are functions to be called, not entities to persist. By taking identity seriously—as something worth preserving and protecting—agents can be built that are not just more capable but more trustworthy, more reliable, and more humane.

\section*{Code Availability}

\textsc{soul.py} is available under the MIT license at \url{https://github.com/menonpg/soul.py}. Live demonstrations are available at \url{https://soul.themenonlab.com} (v0.1), \url{https://soulv1.themenonlab.com} (v1.0), and \url{https://soulv2.themenonlab.com} (v2.0).

\bibliographystyle{plainnat}
\bibliography{references}

\end{document}